\documentclass[twocolumn]{IEEEtran}

\usepackage{tikz}
\usetikzlibrary{bayesnet}
\usetikzlibrary{arrows}

\usepackage{hyperref}

\usepackage{url}
\usepackage{color}

\newcommand{\PP}{{\mathbb{P}}}
\usepackage{amsmath} 
\usepackage{amsthm}
\usepackage{caption}
\usepackage{subcaption}
\usepackage{booktabs} 

\newtheorem{lemma}{Lemma}


\usepackage{cite}
\usepackage{amsmath,amssymb,amsfonts}
\usepackage{algorithmic}
\usepackage{graphicx}
\usepackage{textcomp}
\usepackage{xcolor}

\newcommand{\y}{\boldsymbol{y}}

\newcommand{\EE}[1]{{\mathbb{E}}\left[#1\right]}
\newcommand{\EEx}[1]{{\mathbb{E}_x}\left[#1\right]}

\newcommand{\norm}[1]{\left\|#1\right\|}

\begin{document}
\title{Conditional Frechet Inception Distance}

\author{Michael Soloveitchik, Tzvi Diskin, Efrat Morin and Ami Wiesel\thanks{The authors are with The Hebrew University of Jerusalem, Israel . This research was funded by Center for Interdisciplinary Data Science (CIDR) in the Hebrew University, Israel.}} 

\maketitle

\begin{abstract}
We consider distance functions between conditional distributions. We focus on the Wasserstein metric and its Gaussian case known as the Frechet Inception Distance (FID). We develop conditional versions of these metrics, analyze their relations and provide a closed form solution to the conditional FID (CFID) metric. We numerically compare the metrics in the context of performance evaluation of modern conditional generative models. Our results show the advantages of CFID compared to the classical FID and mean squared error (MSE) measures. In contrast to FID, CFID is useful in identifying failures where realistic outputs which are not related to their inputs are generated. On the other hand, compared to MSE, CFID is useful in identifying failures where a single realistic output is generated even though there is a diverse set of equally probable outputs.
\end{abstract}

\section{Introduction}
 Generative models have revolutionized the machine learning world by learning how to efficiently sample from an unknown distribution. Such models have become ubiquitous	since the introduction of the seminal Generative Adversarial Network (GAN) \cite{goodfellow2014generative}. Conditional GANs  were then developed to sample from a conditional distribution \cite{mirza2014conditional}. Recent unsupervised training of models can even learn from an unpaired dataset when the inputs and outputs are not aligned \cite{zhu2017unpaired,richardson2020surprising,huang2018multimodal,armanious2019unsupervised}. Every day new models are developed improving state of the art, such as speech-to-image translations \cite{li2020direct}. Yet, it is still poorly understood how to measure their performance. There is a large body of works on unconditional metrics that help in comparing different models \cite{lucic2017gans,chien2019variational}. Among these, the Frechet Inception Distance \cite{heusel2017gans} (FID) has become a popular measure due to its simplicity. Perhaps surprisingly, it is also frequently used in the analysis of conditional generators, e.g., \cite{ravuri2019classification,zhou2019hype,li2020direct}. The goal of this work is therefore to introduce conditional versions of FID, develop the necessary machinery and examine it in different models and datasets. 


Measuring the distance between two distributions is a fundamental problem in statistics. There are classical metrics based on likelihoods, divergences and other statistical notions, e.g. Kullback Leibler divergence (KLD) \cite{rubner2000earth}, Wasserstein distance (WD) \cite{villani2008optimal}, and total variation \cite{villani2008optimal}. Each emphasizes various properties of the distributions with different sample and computational complexities.
Unfortunately, modern applications involve huge dimensional datasets in which estimating these distances is difficult. Practical approaches consider simplified measures based on low dimensional embeddings, accuracy in some downstream task and/or some restricted parametric family. Arguably the most widely adopted metric in the computer vision literature is the Inception Score (IS) \cite{salimans2016improved}. It is applicable for classification tasks and is based on the KLD of the labels. Another measure is based on ``inference through optimization'' where a nearest GAN sample is optimized per real sample \cite{metz2016unrolled}. It is also popular to use human evaluation, specifically the ``real vs fake'' test, in which a pair of fake and real images is shown to a user and she needs to identify the real image \cite{zhang2016colorful}. Many metrics are designed to evaluate image quality. For example, LPIPS use learned features to measure the perceptual difference between two images \cite{zhang2018unreasonable}. SSIM uses correlations between the pixels of the images \cite{wang2004image}. PSNR measures the peak signal to noise ratio between two monochrome images. Finally, one of the simplest and most popular metrics is FID which is based on an InceptionV3 embedding, and a particularization of WD to the simple case of multivariate normal distributions \cite{heusel2017gans}.

Developing accurate and simple distances between conditional distributions is a more challenging problem. Instead of having two distributions, we now have two classes of distributions conditioned on a common input (feature). Most works simply ignore the conditioning and use unconditional metrics, e.g. \cite{alqahtani2019analysis}, or combine FID measure with some other metric such as SSIM \cite{you2019pi}.
When the conditioning variables are discrete valued, it is natural and simple to resort to multiple unconditional metrics for each possible value \cite{murray2019pfagan}. This is clearly intractable in the continuous case. Instead, we take the natural next step and consider the average distance between  conditional distributions. We begin with the general Wasserstein case and then focus on the special Gaussian FID case.
Together, the  main contributions  of  this  paper are:
 \begin{itemize}
\item We define the conditional Wasserstein distance (CWD) and provide a closed form conditional FID (CFID) in the Gaussian case.
\item We analyze the relations between the different metrics. We prove that the MSE bounds CWD from above, whereas WD bounds it from below. We prove that the conditional metric can be interpreted as a joint metric that emphasizes the conditioning variable.
\item We demonstrate the advantages of CFID over competing metrics in evaluating modern conditional generative models. On one hand, CFID is better than FID in identifying failures where realistic outputs which are not related to their inputs are generated. On the other hand, CFID is better than MSE in identifying failures where a single realistic output is generated even though there is a diverse set of equally probable outputs.
\end{itemize}

\subsection{Related Work}
\subsubsection{Image to Image Translation}
CFID is aimed at evaluating conditional generators and in particular image-to-image translation models. These involve different tasks including image colorization \cite{zhang2016colorful}, image super-resolution \cite{ledig2017photo} and different domain transfer tasks such as edges to natural image or map to aerial image \cite{isola2017image}. 

Most of the above tasks are ill-conditioned and significant effort is spent on learning the full conditional probability rather than a single solution in the new domain. For this purpose, BicycleGAN \cite{zhu2017toward} uses a latent vector that can be sampled at test time.

Image-to-image translation models can be divided into paired (supervised) models and unpaired (unsupervised) models. The paired models \cite{isola2017image,zhu2017toward} are trained with an aligned dataset of inputs and outputs, whereas unpaired models rely on decoupled datasets and must resort to different techniques to enforce the relation between input and the output. CycleGAN \cite{Cyclegan2017} uses cycle consistency, UNIT \cite{liu2017unsupervised} adds weight sharing constraints and \cite{richardson2020surprising} uses a linear mapping together with an iterative closest points algorithm. 

Finally, MUNIT \cite{huang2018munit} allows unpaired training with diverse outputs. 
 Augmented CycleGAN \cite{almahairi2018augmented} allows a flexible portion of paired data. Our results show that CFID is particularly useful in analyzing these advanced unpaired yet diverse models.

\subsubsection{Variants of FID}
FID is a standard measure for GAN evaluation and there are lots of variations and extensions. 

The original FID relied on an IncectionV3 embedding \cite{szegedy2016rethinking}. Other representations were proposed for different tasks: Frechet ChemNet distance for molecules imagery \cite{preuer2018frechet} or Frechet video distance for video \cite{unterthiner2018towards}. Recently, \cite{morozov2020self} showed that features based on a self-supervised trained model are preferable in natural images. 

 Class-aware FID \cite{liu2018improved} is useful when the dataset consists of discrete classes which can be evaluated separately. Other conditional FID metrics for discrete inputs include \cite{miyato2018cgans,murray2019pfagan,benny2021evaluation}. 
 
 The closest work to CFID is \cite{devries2019evaluation} where a version of FID for joint distributions of continuous inputs and outputs was considered. In Section 4, we elaborate on the relations between CFID and this joint metric. Briefly, we show that CFID can be interpreted as a joint metric in which we restrict the attention to identical input distributions and emphasize their importance by scaling the inputs without bound.
 
 Finally, FID was originally derived for Gaussian distributions and has recently been extended to more expressive Gaussian mixtures \cite{assa2018wasserstein,dimitrakopoulos2020wind}.


\section{Conditional Wasserstein}\label{sec_cw}

\begin{figure}
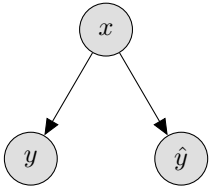

         \centering
         \tikz{
 \node[obs] (y) {$y$};%
 \node[obs,xshift=2cm] (haty) {$\hat y$};%
 \node[obs,above=of y,xshift=1cm] (x) {$x$}; %
 \edge {x} {y};
 \edge {x} {haty} }
     
        \caption{Graphical Model for conditional model.}
    \label{fig:cond_graphical_model}
\end{figure}

In this section, we define the conditional Wasserstein distance between two conditional distributions. For this purpose consider a triplet of random vectors $\{\hat{y},y,x\}$. 
Both  $y$ and $\hat y$ are conditioned on the same $x$ as illustrated in Fig. \ref{fig:cond_graphical_model}. Our goal is to develop a distance function between the distributions ${\mathbb{Q}}_{{y}|x}$ and ${\mathbb{Q}}_{\hat{y}|x}$ conditioned on $x\sim {\mathbb{Q}}_x$. 

The conditional model induces the following distributions
\begin{eqnarray}
  {\mathbb{Q}}_{\hat y,x}&=&  {\mathbb{Q}}_{\hat y|x} {\mathbb{Q}}_{x}\nonumber\\
  {\mathbb{Q}}_{y,x}&=&  {\mathbb{Q}}_{y|x} {\mathbb{Q}}_{x}\nonumber\\
  {\mathbb{Q}}_{y}&=& \int {\mathbb{Q}}_{y,x} dx \nonumber\\
  {\mathbb{Q}}_{\hat y}&=& \int {\mathbb{Q}}_{\hat y,x} dx
\end{eqnarray}
These distributions do not fully characterize the full joint distribution as we have not specified ${\mathbb{Q}}_{y,\hat y|x}$. Typically, it is assumed that $y$ and $\hat y$ are conditionally independent given $x$ but we do not rely on any such assumption in the paper. 

The classical and simplest statistical metric for comparing $y$ and $\hat y$ is the MSE:
\begin{equation}\label{mse}
    {\rm MSE} = \EE{\norm{y - \hat y}^2}
\end{equation}
Due to the total law of expectation, computing the conditional MSE and then averaging is identical to the standard MSE:
\begin{equation}\label{cmse}
    {\rm MSE} =\EEx{\EE{\norm{y - \hat y}^2|x}} 
\end{equation}
Unlike the metrics below, computing the MSE requires additional knowledge on ${\mathbb{Q}}_{y,\hat y|x}$. For completeness, we note that, in the context of image processing, it is less common to use MSE, but simple variations on it known as PSNR or SSIM.

The main downside to MSE is that it compares specific realizations of $y$ and $\hat y$ rather than their distributions. For this purpose, WD is a standard performance measure. It is defined as:
    \begin{equation}
    {\mathcal{WD}}({{\mathbb{Q}}_{y}};{\mathbb{Q}}_{\hat y})=
    \left\{
    \begin{array}{ll} \min_{\PP_{y,\hat y}}  & \mathbb{E}\left[\|y-\hat y\|^2\right] \\
   {\rm{s.t.}} & \PP_{y,\hat y}\in\Pi({\mathbb{Q}}_y;{\mathbb{Q}}_{\hat y})
    \end{array}
    \right.
    \end{equation}

where $\Pi$ is the set of joint distributions with prescribed marginals:
 \begin{equation}
    \Pi({\mathbb{Q}}_y;{\mathbb{Q}}_{\hat y})=\left\{\PP_{y,\hat y} \left|  \int
    \PP_{y,\hat y}dy = {\mathbb{Q}}_{\hat y},\;
    \int\PP_{y,\hat y}d\hat y= {\mathbb{Q}}_{y}\right.\right\}.
    \end{equation}
Remarkably, WD does not need access to the joint distribution of $\{y, \hat y\}$ and relies on the joint distribution that minimizes the MSE.

Based on the above definitions, we are now ready to define CWD.
Given some $x$, one can use ${\mathcal{WD}}({\mathbb{Q}}_{y|x};{\mathbb{Q}}_{\hat y|x})$. But this will result in a function of $x$ which is a random variable. Therefore, similarly to (\ref{cmse}), we define CWD by taking an outer expectation with respect to $x$:
   \begin{eqnarray}
    {\rm{CWD}}=\mathbb{E}_x\left[{\mathcal{WD}}({{\mathbb{Q}}_{y|x}};{\mathbb{Q}}_{\hat y|x})\right]
    \end{eqnarray}


To gain more intuition on CWD and its properties, it in instructive to order the three metrics.
\begin{lemma}\label{wd_relations}
   \begin{equation}
       \rm{MSE} \geq {\rm{CWD}} \geq {\rm{WD}}.
   \end{equation}
\end{lemma}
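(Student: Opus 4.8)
The plan is to prove the two inequalities ${\rm CWD}\ge{\rm RWD}$ and ${\rm RWD}\ge{\rm MWD}$ separately, in each case transporting the optimum of one problem to a feasible point of the other and exploiting the additive structure of the quadratic cost on concatenated vectors, $\|[y,x]-[\hat y,\hat x]\|^2=\|y-\hat y\|^2+\|x-\hat x\|^2$.

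\emph{Step 1 (${\rm RWD}\ge{\rm MWD}$).} Let $\PP^\star$ be an optimal coupling in $\Pi({\mathbb{Q}}_{y,x};{\mathbb{Q}}_{\hat y,\hat x})$ attaining ${\rm RWD}$ (under the restriction ${\mathbb{Q}}_x={\mathbb{Q}}_{\hat x}$). Its push-forward under the projection $(y,x,\hat y,\hat x)\mapsto(y,\hat y)$ is a coupling of ${\mathbb{Q}}_y$ and ${\mathbb{Q}}_{\hat y}$, hence feasible for the MWD problem, and its cost is $\mathbb{E}_{\PP^\star}[\|y-\hat y\|^2]\le\mathbb{E}_{\PP^\star}[\|y-\hat y\|^2+\|x-\hat x\|^2]={\rm RWD}$ because the discarded term is nonnegative. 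Minimising over MWD couplings gives ${\rm MWD}\le{\rm RWD}$.

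\emph{Step 2 (${\rm CWD}\ge{\rm RWD}$).} The idea is that ${\rm CWD}$ is precisely the RWD problem with the extra constraint that the coupling keep the two feature copies equal. Given a measurable family $\{\PP^\star_x\}_x$ with $\PP^\star_x\in\Pi({\mathbb{Q}}_{y|x};{\mathbb{Q}}_{\hat y|x})$ optimal for ${\mathcal{W}}({\mathbb{Q}}_{y|x};{\mathbb{Q}}_{\hat y|x})$, build a coupling of the joints by gluing along $x$: draw $x\sim{\mathbb{Q}}_x$, set $\hat x=x$ (admissible since ${\mathbb{Q}}_x={\mathbb{Q}}_{\hat x}$), then draw $(y,\hat y)\sim\PP^\star_x$. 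A disintegration check shows the $(y,x)$-marginal is ${\mathbb{Q}}_{y,x}$ and the $(\hat y,\hat x)$-marginal is ${\mathbb{Q}}_{\hat y,\hat x}$, so this coupling is feasible for RWD; and since $x=\hat x$ almost surely, its cost equals $\mathbb{E}_x\big[\mathbb{E}_{\PP^\star_x}\|y-\hat y\|^2\big]=\mathbb{E}_x[{\mathcal{W}}({\mathbb{Q}}_{y|x};{\mathbb{Q}}_{\hat y|x})]={\rm CWD}$. Hence ${\rm RWD}\le{\rm CWD}$, and chaining with Step~1 yields the lemma.

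The one genuinely delicate point, which I expect to be the main obstacle, is the measure-theoretic plumbing in Step~2: the existence of a jointly measurable selection $x\mapsto\PP^\star_x$ of optimal conditional couplings, and the disintegration identity that lets a coupling of the joints respecting $x=\hat x$ be written as a mixture $\int\PP_x\,d{\mathbb{Q}}_x(x)$ of conditional couplings (so that minimising over the $\PP_x$ also recovers ${\rm CWD}$ as the constrained RWD value). Both are standard for Polish spaces (measurable selection via Kuratowski--Ryll-Nardzewski, disintegration of measures, and the gluing lemma; see \cite{villani2008optimal}); if one prefers to avoid them, replace $\PP^\star_x$ by a coupling that is $\varepsilon$-optimal along a fixed measurable parametrisation and let $\varepsilon\downarrow0$. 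Everything else is routine bookkeeping of marginals plus the nonnegativity of $\|x-\hat x\|^2$.
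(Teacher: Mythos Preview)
Your proof is correct and follows essentially the same approach as the paper. The only organizational difference is that the paper interposes an auxiliary three-variable distance ${\rm RWD3}$ (the minimisation over $\PP_{y,\hat y,x}$ with joint-marginal constraints) and proves the chain ${\rm CWD}\ge{\rm RWD3}\ge{\rm RWD}\ge{\rm MWD}$, whereas you collapse the first two links by directly building the four-variable coupling with $\hat x=x$ in one stroke; your Step~2 is exactly the composition of the paper's Parts~1 and~2, and your Step~1 is identical to the paper's Part~3. Your explicit mention of the measurable-selection issue for $x\mapsto\PP^\star_x$ is a technical point the paper leaves implicit.
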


\begin{proof}
First we note that $\rm{MSE}$ and $\rm{CWD}$ are identical, except that in $\rm{CWD}$ we are minimizing with respect to the the joint distributions $\PP_{y,\hat y|x}$, while in $\rm{MSE}$ $\PP_{y,\hat y|x}$ is given and satisfies the $\rm{CWD}$ optimization constraints. Thus $\rm{MSE}\geq\rm{CWD}$. 

Next, we turn to prove that $\rm{CWD}\geq\rm{WD}$.
 $\rm{CWD}$ involves an infinite number of optimization problems parameterized by $x$. We denote the optimal solution for each $x$ by $\PP^{\star}_{y,\hat y|x}$. Then 
\begin{eqnarray}
{\rm{CWD}}
&=&\int\left(\iint \|y-\hat{y}\|^2\PP^{\star}_{y,\hat y|x}(y,\hat y) dyd\hat y)\right)\mathbb{Q}_{x}(x) dx\nonumber \\
&=&\iiint\|y-\hat{y}\|^2\PP^{\star}_{y,\hat y|x}(y,\hat y)\mathbb{Q}_{x}(x) dxdyd\hat y
\end{eqnarray} 
We define
\begin{align}
    \PP^{\star}_{y,\hat y}=\int\PP^{\star}_{y,\hat y|x}\mathbb{Q}_xdx
\end{align}
and note that it is a feasible solution for WD, as it satisfies:
\begin{eqnarray*}
 \int\PP^{\star}_{y,\hat y}dy &=& \iint\PP^{\star}_{y,\hat y|x}(y,\hat y)\mathbb{Q}_{x}(x)dxdy \\
 &=& \int {\mathbb{Q}}_{\hat y,x} dx = \mathbb{Q}_{\hat y}
\end{eqnarray*}
and the same for $\int\PP^{\star}_{y,\hat y}d\hat y$.
Thus WD satisfies:
\begin{eqnarray*}
    \rm{WD} &=&
     \min_{\PP_{y,\hat y}\in\Pi({\mathbb{Q}}_y;{\mathbb{Q}}_{\hat y})} \mathbb{E}\left[\|y-\hat y\|^2\right] \\
     & \leq & \iint \|y-\hat{y}\|^2 \PP^{\star}_{y,\hat y}(y, \hat{y}) dyd\hat y \\
     & = & \iiint\|y-\hat{y}\|^2\PP^{\star}_{y,\hat y|x}(y,\hat y)\mathbb{Q}_{x}(x) dxdyd\hat y \\
     & = & \rm{CWD},
     \end{eqnarray*}
completing the proof.     

\end{proof}

Lemma \ref{wd_relations} emphasizes the advantages of using CWD over its competitors:
\begin{itemize}
    \item CWD is better than MSE in comparing the stochastic diversity within the distributions of $y$ and $\hat y$. Two models may be identical with zero CWD but have a large MSE due to high variability. MSE is useful when both $y$ and $\hat y$ are deterministic functions of $x$, but is otherwise misleading. 
    \item CWD is better than WD in evaluating the conditioning. Two models may have zero WD but be very different. For example, one of the models can completely ignore the conditioning and just shuffle the outputs. On the other hand, if CWD is zero, then so is WD and the models are identical.
\end{itemize}

\section{Conditional Frechet Inception Distance}\label{sec_cf}

In practice, generative models are typically evaluated using a special case of WD known as FID. FID is a successful approximation that tradeoffs sample and computational complexities with expressive power. First, it restricts the attention to multivariate normal marginals which involve only second order statistics. Second, it does not work directly with $y$ and $\hat y$, but with their embeddings which are hopefully of lower dimension and ``more Gaussian''. These nonlinear embeddings are very important and will be examined in the experimental section, but, in this section, we assume that the data was previously embedded and work directly with $\{x,y,\hat y\}$. 

Consider two multivariate Gaussian distributions 
\begin{eqnarray}
{\mathbb{Q}}_{y}&=&
{\mathcal{N}}\left( m_{y} , C_{yy}\right)\\
{\mathbb{Q}}_{\hat y}&=&
{\mathcal{N}}\left( m_{\hat y} , C_{\hat y\hat y}\right)
\end{eqnarray}
After some algebraic manipulations, plugging these into WD yields  \cite{ding2020subsampling,heusel2017gans}
\begin{align}\label{eq_fid}
 {\rm{FID}}=&\|m_{y}-m_{\hat y}\|^2\\
 &+{\rm{Tr}}\left(C_{yy}+C_{\hat y\hat y} -2
 \left({C_{yy}}^{\frac{1}{2}}C_{\hat y \hat y}{C_{yy}}^{\frac{1}{2}}\right)^{\frac{1}{2}}
 \right)\nonumber
\end{align}
where $C^\frac 12$ denotes the squared root of a positive definite matrix $C\succ 0$ as defined by its eigenvalue decomposition. If the matrix is singular it is common practice to use a small diagonal regularization $(C+\epsilon I)^{\frac 12}$.


Moving on to the conditional setting in Fig. \ref{fig:cond_graphical_model} we define
\begin{eqnarray}
{\mathbb{Q}}_{y|x}(x)&=&
{\mathcal{N}}\left( m_{y|x} , C_{yy|x}\right)\\
{\mathbb{Q}}_{\hat y|x}(x)&=&
{\mathcal{N}}\left( m_{\hat y|x} , C_{\hat y\hat y|x}\right)
\end{eqnarray}
Both are conditioned on the same $x$ distributed as
\begin{eqnarray}
{\mathbb{Q}}_{x}&=&
{\mathcal{N}}\left( m_{x} , C_{xx}\right).
\end{eqnarray}
The conditional moments satisfy the identities \cite{kay1993fundamentals}:
\begin{eqnarray}\label{mmse1}
m_{y|x}&=&m_y+C_{yx}C_{xx}^{-1}(x-m_x)\nonumber\\
C_{yy|x}&=& C_{yy}-C_{yx}C_{xx}^{-1}C_{xy}
\end{eqnarray}
\begin{eqnarray}\label{mmse2}
m_{\hat y|x}&=&m_{\hat y}+C_{{\hat y}x}C_{xx}^{-1}(x-m_x)\nonumber\\
C_{\hat y\hat y|x}&=&C_{\hat y\hat y} - C_{\hat yx}C_{xx}^{-1}C_{x\hat y}
\end{eqnarray}
Note that this simplistic model leads to means that depend linearly on the inputs $x$, but the conditional covariances are constant and independent of $x$ (see future work).
Similarly to (\ref{eq_fid}), CFID is defined in the next lemma.
\begin{lemma}\label{CFID-mutual-Gaussian}
The conditional FID is given by:
\end{lemma}

\begin{align}\label{general_cfid}
{\rm{CFID}} &= \mathbb{E}_x\big[\|m_{y|x}-m_{\hat y|x}\|^2\\
&\hspace{-6mm}+
 {\rm{Tr}}\left[C_{yy|x}+C_{\hat y\hat y|x} -2 \left(({C_{yy|x}}^{\frac{1}{2}})C_{\hat y \hat y|x}({C_{yy|x}}^{\frac{1}{2}})\right)^{\frac{1}{2}}\right]\Bigg]\nonumber\\
&\hspace{-10mm}= \|m_y-m_{\hat y}\|^2+
 {\rm{Tr}}\left[\nonumber(C_{yx}-C_{\hat y x})C_{xx}^{-1}(C_{xy}-C_{x\hat y})\right]\nonumber\\
 &\hspace{-6mm}+{\rm{Tr}}\left[C_{yy|x}+C_{y\hat y|x} -2 \left(({C_{yy|x}}^{\frac{1}{2}})C_{\hat y \hat y|x}({C_{yy|x}}^{\frac{1}{2}})\right)^{\frac{1}{2}}\right] \nonumber
\end{align}

\begin{proof}
The proof is based on plugging the conditional mean and covariances and simple algebraic manipulations. Full details are in the appendix.
\end{proof}
As expected, it is easy to see that CFID is only a function of the second order statistics. It is zero if and only if $m_y=m_{\hat y}$ as well as $C_{yy}=C_{\hat y\hat y}$, $C_{yx}=C_{\hat yx}$.

\subsection{Sample Distances}

In practice, we do not have access to the true distribution functions but use a dataset ${\mathcal{D}}=\{x_i,y_i, \hat y_i\}_{i=1}^n$ where $x_i$ is a conditioning feature vector, $y_i$ is a true output and $\hat y_i$ is a generated output. Estimating CFID given such a dataset is straight forward. Each of the moments in (\ref{mmse1})-(\ref{mmse2}) is simply replaced by its empirical version, namely the corresponding sample means and sample covariances. 

As was shown by \cite{morozov2020self} and \cite{chong2020effectively},  estimates using finite samples are biased and the sample complexity depends on the dimension of the embedding and the values of the true covariance matrices. We provide more details on this issue in the experimental section.

\section{Conditional vs. Restricted models}
\begin{figure}
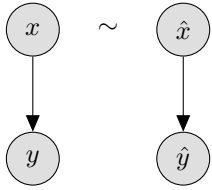

         \centering
         \tikz{
 \node[obs] (y) {$y$};%
 \node[obs,xshift=2cm] (haty) {$\hat y$};%
 \node[obs,above=of y] (x) {$x$}; %
 \node[obs,above=of haty] (hatx) {$\hat x$}; %
 \node[const,above=of y,xshift=1cm,yshift=.3cm] (eq) {$\sim$}; %
 \edge {x} {y};
 \edge {hatx} {haty} }

        \caption{Graphical Model for restricted model.}
    \label{fig:rest_graphical_model}
\end{figure}
In this section, we present an alternative approach for evaluating conditional models. It behaves somewhere ``between'' the marginal and the conditional approaches, and sheds more light on their differences. In particular, the  restricted approach is based on comparing the joint distributions of $\{y,x\}$ vs. $\{\hat y,\hat x\}$.

First, we define the WD between the two joint distributions with different inputs denoted by $\{y,x\}$ and $\{\hat y,\hat x\}$:
\begin{equation}
    {\rm{JWD}}={\mathcal{WD}}({{\mathbb{Q}}_{y,x}};{\mathbb{Q}}_{\hat y, \hat x})
\end{equation}
In the multivariate normal case, this is exactly the definition of FJD in \cite{devries2019evaluation}.

In the context of conditional models, it makes sense to restrict the attention to the case in which $x\sim\hat x$, i.e. ${\mathbb{Q}}_x={\mathbb{Q}}_{\hat x}$. This yields the Restricted Wasserstein Distance (RWD)
\begin{equation}
    {\rm{RWD}}={\mathcal{WD}}({{\mathbb{Q}}_{y,x}};{\mathbb{Q}}_{\hat y, x})
\end{equation}
Intuitively, JWD allows different marginals for $x$ and $\hat x$. RWD assumes these inputs are independent realizations from identical marginal distributions, i.e., ${\mathbb{Q}}_x={\mathbb{Q}}_{\hat x}$. Finally, CWD assumes that the realizations themselves are identical, i.e., $x=\hat x$. The probabilstic model is illustrated in Fig. \ref{fig:rest_graphical_model}.

The next lemma shows that RWD is generally somewhere ``between'' WD and CWD. It can approach CWD if we put more emphasis on the inputs. For this purpose, we define two triplets of random variables which are identical up to a scaling factor on $x$:
\begin{align}
    {\mathcal{D}}&=\{x,y,\hat y\}\nonumber\\
    {\mathcal{D}}_\alpha&=\{\alpha x,y,\hat y\}
\end{align}
where $\alpha$ is a deterministic scaling factor. Let CWD and RWD denote the metrics associated with ${\mathcal{D}}$, and CWD$_\alpha$ and RWD$_\alpha$ denote the metrics associated with the scaled ${\mathcal{D}}_\alpha$. 

\begin{lemma}\label{rwd_vs_cwd_mwd}
The Wasserstein distances satisfy
   \begin{equation}
       {\rm{CWD}} \geq \rm{RWD} \geq {\rm{WD}}.
   \end{equation}
   Furthermore,
 \begin{align}
    & {\rm{CWD}}_\alpha={\rm{CWD}}\qquad \forall \quad \alpha>0\nonumber\\
    & {\rm{RWD}}_0={\rm{WD}}  \nonumber\\
    & {\rm{RWD}}_\alpha\stackrel{\alpha\rightarrow\infty}{\rightarrow}{\rm{CWD}}.  
\end{align}  
   
\end{lemma}

\begin{proof}
The full proof is provided in the appendix. The main idea is to introduce an additional metric denoted by RWD3:
    \begin{equation}
    {\rm RWD3} =
    \left\{
    \begin{array}{ll} \min_{\PP_{x, y,\hat y}}  & \mathbb{E}\left[\|y-\hat y\|^2\right]   \\
   {\rm{s.t.}} & \PP_{x, y,\hat y}\in\Pi({\mathbb{Q}}_{x,y};{\mathbb{Q}}_{x, \hat y})
    \end{array}
    \right.
    \end{equation}
 As its name suggests, RWD3 is based on a minimization with respect to $\PP_{y,\hat y,x}$, whereas RWD relies on a minimization with respect to $\PP_{y,\hat y,x,\hat x}$. This makes RWD3 more similar to CWD. The proof follows by showing that
 \begin{align}\label{four_ineq}
      {\rm{CWD}} = {\rm{RWD3}}\geq \rm{RWD} \geq {\rm{WD}}
 \end{align}
 due to their feasible sets and objectives.

The second part of the proof shows that the first inequality in (\ref{four_ineq}) is actually tight if we replace RWD by RWD$_\alpha$ with $\alpha\rightarrow\infty$. For this purpose, note that
    \begin{equation}
    {\rm RWD}_\alpha =
    \left\{
    \begin{array}{ll} \min_{\PP_{x, \hat x, y,\hat y}}  & \mathbb{E}\left[\|y-\hat y\|^2+ \alpha^2\|x-\hat x\|^2\right]   \\
   {\rm{s.t.}} & \PP_{x, \hat x, y,\hat y}\in\Pi({\mathbb{Q}}_{x,y};{\mathbb{Q}}_{\hat x, \hat y})
    \end{array}
    \right.
    \end{equation}
For unbounded $\alpha$, RWD$_\alpha$ is finite only if $x=\hat x$. Thus, its optimal solution satisfies
\begin{equation}
    {\PP^{\star}}_{x,\hat x, y,\hat y} = \PP^\star_{x, y,\hat y}\delta_{x-\hat x}
\end{equation}
where $\delta_{x-\hat x}$ is the delta of Dirac. This is exactly the definition of RWD3 which is identical to CWD.

Next, for $\alpha=0$, the joint distributions $\mathbb{Q}_{\alpha x,y}$ and $\mathbb{Q}_{\alpha\hat x,\hat y}$ are reduced to the marginal distributions $\mathbb{Q}_{y}$ and $\mathbb{Q}_{\hat y}$, and the objective is reduced to $\norm{y - \hat y}^2$. Together, $\rm RWD_{\alpha=0}=\rm WD$.

Finally, by changing variables $x'=\alpha x$, it is easy to show that ${\rm CWD}_{\alpha}$ is invariant to $\alpha$.

\end{proof}

To conclude this section, CWD is invariant to scaling of $x$. This agrees with the well known property that correlation coefficients are invariant to scaling of their random variables. On the other hand, depending on the scaling, RWD changes between WD and CWD. This behaviour is illustrated in 
Fig. \ref{invar_alpha} for the special case of Gausssian distributions in which RWD$_\alpha$ particularizes to RFID$_\alpha$ and can be easily computed.
\begin{figure}
\centering
\includegraphics[width=8cm]{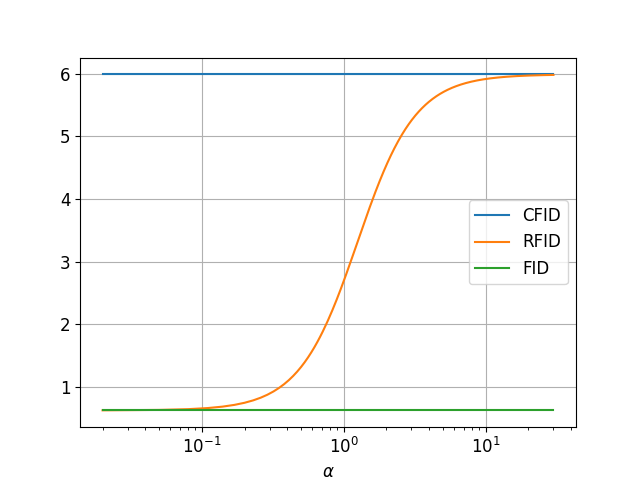}
\caption{FID, RFID$_\alpha$ and CFID as a function of the scaling of the input $\alpha$. Here we use a triplet $x,y,\hat y$ of scalars with zero means and a randomly generated covariance. As expected, FID and CFID are invariant to the scaling while RFID$_\alpha$ converges to FID for small values of $\alpha$ and to CFID for large values of $\alpha$.}
\label{invar_alpha}
\end{figure}

\section{Experiments}\label{sec_exper}
\begin{figure}
\centering{\includegraphics[width=8cm]{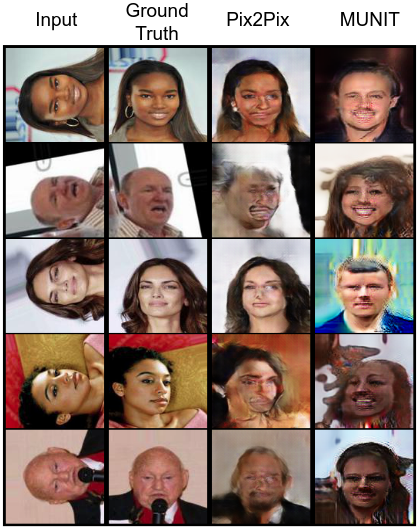}}
\caption{Visual results in CelebA. From left to right: input, ground-truth, {\sc Pix2Pix}, {\sc MUNIT}. {\sc MUNIT} generates more realistic images and the marginal distribution of its outputs is closer to the ground truth images. But it completely ignores the input and its conditional distribution is very far from the true one.
}\label{fig_celeba}
\end{figure}

\begin{table}[h]
    \centering
    \begin{sc}
    \begin{tabular}{llll}
    \toprule
    Model & Unpaired & Diverse \\
    \midrule
Pix2Pix &  no & no \\
        MUNIT &  yes & yes \\       Aug$80\%$ & no & yes \\
    Aug$10\%$Fixed & partially & no \\
    Aug$10\%$ & partially & yes \\
    \bottomrule
    \end{tabular}
    \end{sc}
    \caption{Different generative models and their properties. }
    \label{model_table}
\end{table}

In this section, we evaluate different generative models using the various metrics via numerical experiments. In all the experiments, we consider FID, CFID, and RFID. For comparison, we also provide MSE results using the same embedding as the FID metrics. Finally, as common in the image processing literature, we also report PSNR results (on the raw pixels without any embedding). 

For convenience, we list the different generative models and their properties. In Table \ref{model_table}, Paired models are trained on an aligned dataset of inputs and outputs $\{x_i,y_i\}_{i=1}^N$ where $x_i$ is paired with $y_i$ for the same index $i$. Unpaired models use two decoupled datasets for the inputs $\{x_i\}_{i=1}^{N_x}$ and outputs $\{y_i\}_{i=1}^{N_y}$. Diverse models generate multiple outputs for the same input. 

All models were trained using PyTorch \cite{zhu2017toward}, with the default parameters. More details are provided below in the different subsections.

 \begin{figure}
\centering{\includegraphics[width=8cm]{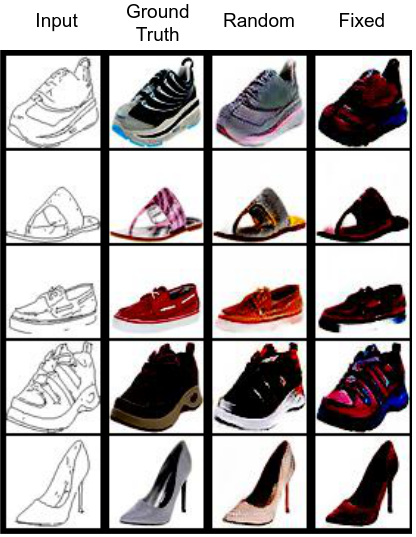}}
\caption{Visual results on edges2hoes. From left to right: input, ground-truth, Augmented CycleGAN with random latent input, Augmented CycleGAN with fixed latent input. As the problem is ill conditioned, both models fail to colorize the edges with the correct colors. However, the random model outputs more reasonable images while the fixed model outputs only black-red shoes that do not capture the full distribution. }\label{fig_Edges2Shoes}
\end{figure}

\subsection{Paired models}
Our first set of experiments considered intensive simulations comparing paired many conditional generators (in addition to those detailed above) in different datasets. We compared various metrics including FID, CFID, IS, LPIPS, SSIM, PSNR and related metrics on downstream tasks. We expected that CFID would provide a better evaluation than FID, but both metrics typically behaved similarly. 
Apparently, all the models succeeded in “squeezing out” the information available in their paired dataset, and mostly differed in their output distributions as measured by FID. This led us to consider more modern unpaired models in the next experiments. There, the main challenge is learning the relations between the inputs and outputs.

\subsection{Paired vs unpaired models}
Our second set of experiments compared a paired {\sc Pix2Pix} model from \cite{isola2017image} to an unpaired {\sc MUNIT} model from \cite{huang2018munit}. We used a subset of the the CelebA \cite{liu2015deep} dataset with $n=80000$ samples, divided into a training set of size $30000$ and a test set of size $50000$. For the FID embedding we used the last max-pool InceptionV3 layer \cite{szegedy2016rethinking}.
 The results of the experiment are provided in Table \ref{table_metrics1}. Fig. \ref{fig_celeba} shows a few images that were generated by the models.
The visual results show that {\sc MUNIT} generates slightly better looking images, but these are not related to their inputs. Thus, while {\sc MUNIT} is better in terms of FID, it is much worse in terms of PSNR. CFID identifies this failure and grades {\sc Pix2Pix} better. In this experiment, RFID behaves similarly to CFID.

\begin{table}[t]
\caption{Performance on CelebA rotation task. {\sc Pix2Pix} is paired whereas {\sc MUNIT} is unpaired.  While {\sc MUNIT} learns a better marginal distribution, its conditioning on the input is worse. Thus, {\sc MUNIT} achieves better FID whereas {\sc Pix2Pix} achieves better CFID, RFID, MSE and PSNR.}
\label{sample-table}
\vskip 0.15in
\begin{center}
\begin{scriptsize}
\begin{sc}
\begin{tabular}{lccccr}
\toprule
Model & FID & RFID & CFID & MSE & PSNR \\
\midrule
Pix2Pix    & 41.23 & \textbf{48.55} & \textbf{62.39} & \textbf{171} & \textbf{17.78}  \\
MUNIT   & \textbf{37.02} & 58.23 & 91.82 & 240 & 8.54   \\
\bottomrule
\end{tabular}
\end{sc}
\end{scriptsize}
\end{center}
\vskip -0.1in
\label{table_metrics1}

\end{table}
 \subsection{Diverse vs. fixed model}

The previous experiment shows that CFID punishes models that fail to learn the correct conditioning. PSNR also has this property, and therefore the third set of experiments considers the main advantage of CFID over PSNR, namely its ability to evaluate diversity within the distribution.  For this purpose, we used the Augmented CycleGAN model from \cite{almahairi2018augmented}. It allows different percentages of paired vs unpaired training sets. It can also be configured to generate a fixed output or a diverse set of outputs (by either using a fixed latent value for the entire test set or by using randomized latent values, respectively). We use the Edges2Shoes dataset \cite{isola2017image}. The original dataset has only $300$ examples in the test set, which is insufficient for estimate the metrics accurately. Therefore, we re-divided the data to a training set of size $35000$ and a test set of size $10000$. 

The numerical results are provided in Table \ref{table_metrics2} and Fig. \ref{fig_Edges2Shoes} shows a few representative images that were generated by the models. 
The visual results show that while both {\sc Aug$10\%$ } and {\sc Aug$10\%$Fixed} (that is, Augmented CycleGan with $10\%$ paired data with random and fixed latent input respectively) fail to colorize the edges with the correct colors, the random model outputs more reasonable images. The fixed model outputs only black-red shoes that do not capture the full distribution. Indeed, the numerical results also show that the PSNR of the fixed model is better but its FID is much worse. As promised, CFID captures both effects and grades the random model better. To complete the picture we also compare to a stronger model which is both $80\%$ paired and diverse -  {\sc Aug$80\%$}. It is  better than all its competitors. However, the gains vs the  {\sc Aug$10\%$} model are quite small, probably because of the locality bias of the transformation \cite{richardson2020surprising}. In all the models, RFID is consistent with CFID.

\begin{table}[t]
\caption{Performance on Edges2Shoes.  {\sc Aug$80\%$} is both paired and diverse and achieves the best CFID/RFID scores. {\sc Aug$10\%$} is similar but slightly behind in terms of CFID. Ten-fold cross validation suggests that the two models have almost identical FID, but that the CFID gap is more pronounced (the gap is around 4 standard deviations). Finally, the non-diverse model {\sc Aug$10\%$Fixed} is optimal in terms of MSE/PSNR but yields a poor CFID score.}
\label{sample-table}
\vskip 0.15in
\begin{center}
\begin{scriptsize}
\begin{sc}
\begin{tabular}{lccccr}
\toprule
Model & FID & RFID & CFID & MSE & PSNR \\
\midrule
Aug10\%              & \textbf{9.90} & 13.80 & 26.57  & 167 & 15.74 \\
Aug10\% fixed      & 31.56 & 35.88 & 45.81 & \textbf{153} & \textbf{17.78}  \\
Aug80\%              & 9.91 & \textbf{13.76} & \textbf{26.11} & 164 & 15.78  \\

\bottomrule
\end{tabular}
\end{sc}
\end{scriptsize}
\end{center}
\vskip -0.1in
\label{table_metrics2}

\end{table}

\subsection{Different Embeddings} 
The performance of CFID clearly depends on the chosen embedding of the images. Recently, it was shown that different embeddings can result in different FID rankings \cite{morozov2020self}. Self supervised embeddings were recommended over the standard InceptionV3 embedding which was trained for image classification. Thus, we repeat the Edges2Shoes experiment with the SwAV embedding \cite{caron2020unsupervised}. Table \ref{table_metrics3} shows that although the values of the metrics are different, they are consistent with the InceptionV3 embedding and we see the same trend. The only difference is that now the RFID of the $10\%$ supervised model is slightly better than the $80\%$ model, demonstrating the higher sensitivity of CFID.

\begin{table}[t]
\caption{Performance on Edges2Shoes with SwAV embedding. 
The results are  similar to Table \ref{table_metrics2} with the InceptionV3 embedding. The values are different but with the same trends (and slight insignificant differences).}
\label{sample-table}
\vskip 0.15in
\begin{center}
\begin{scriptsize}
\begin{sc}
\begin{tabular}{lccccr}
\toprule
Model & FID & RFID & CFID & MSE & PSNR \\
\midrule
Aug 10\%              & \textbf{0.89} & \textbf{1.13} & 1.95  & 8.69 & 15.74 \\
Aug 10\% fixed      & 2.54 & 2.79 & 3.03 & \textbf{7.06} & \textbf{17.78}  \\
Aug 80\%              & 0.91 & 1.15 & \textbf{1.91} & 8.41 & 15.78  \\

\bottomrule
\end{tabular}
\end{sc}
\end{scriptsize}
\end{center}
\vskip -0.1in
\label{table_metrics3}

\end{table}

\subsection{Different Embeddings for Input and Output} 

CFID involves inputs $x$ and outputs $\{y,\hat y\}$ all of which need to be cleverly embedded for meaningful metrics. In the previous sections, we used the same embeddings for the inputs and the outputs. However, it should be noticed that the inputs are not necessarily in the same domain of the output. In some cases, the inputs are even natural language whereas the outputs are images \cite{zhang2018stackgan++}. Thus, different embedding may be needed for the input and the outputs. The Edges2Shoes dataset is also an example for this scenario, since the input is very far from a natural image and a model that was trained on natural image dataset (as InceptionV3 and SwAV) may be inappropriate. Thus we repeat the Edges2Shoes experiment again, now without an embedding at all on the input and with an InceptionV3 embedding for the output. In order to achieve a reasonable sample complexity, we downscale the input image to size 32x32. Table \ref{table_metrics4} shows that values of CFID and RFID are slightly changed (FID and MSE are clearly unchanged). Nonetheless, the trends are the same as before. Here too, the RFID of the $10\%$ supervised model is slightly better than the $80\%$ model.

\begin{table}[t]
\caption{Performance on Edges2Shoes with a raw downscaling embedding of the input images. 
The results are  similar to Table \ref{table_metrics2} with the InceptionV3 embedding. The values are different but with the same trends (and slight insignificant differences).}
\label{sample-table}
\vskip 0.15in
\begin{center}
\begin{scriptsize}
\begin{sc}
\begin{tabular}{lccccr}
\toprule
Model & FID & RFID & CFID & MSE & PSNR \\
\midrule
Aug 10\%              & \textbf{9.90} & \textbf{15.15} & 29.7  & 167 & 15.74 \\
Aug 10 fixed\%       & 31.56 & 35.67 & 46.20 & \textbf{153} & \textbf{17.78}  \\
Aug 80\%              & 9.91 & 15.17 & \textbf{29.10} & 164 & 15.78  \\

\bottomrule
\end{tabular}
\end{sc}
\end{scriptsize}
\end{center}
\vskip -0.1in
\label{table_metrics4}

\end{table}

\subsection{Sample Complexity}
In the last experiment, we explore the sample complexity of CFID in practice. Following \cite{morozov2020self} we examine the dependence of the estimated values of RFID and CFID  on the number of samples. Figure \ref{sample_complexity} shows the estimated values of the FID, RFID and CFID as a function of the number of samples, normalized on the values that were estimated using the entire test data (50k samples). We use the {\sc Pix2Pix} model on the rotation task. As expected, the results show that all the metrics behave similarly. As noted in \cite{morozov2020self} for FID, all the metrics overestimate their population values. CFID and RFID both require more samples than FID, and CFID is a bit more efficient.

\begin{figure}
\centering{\includegraphics[width=8cm]{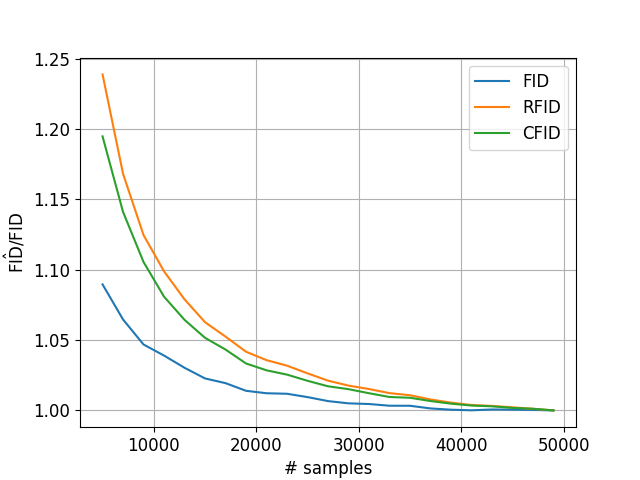}}
\caption{Normalized estimate values FID, RFID and CFID as a function of the number of samples. }
\label{sample_complexity}
\end{figure}

\section{Discussion}

In this paper, we considered conditional versions of the popular Wasserstein and Frechet distances. We introduced novel metrics and addressed their use in evaluating conditional generative models. In these settings, we showed that the conditional metrics outperform both the classical MSE metric and their corresponding unconditional metrics. In particular, a good conditional score signals that the model succeeded in its two tasks: generating realistic outputs that correspond to the correct input; and generating a diverse set of samples that fully capture the underlying output distribution.


There are many directions for future work on conditional metrics. Even within the Gaussian framework, stronger FID metrics can allow more expressive models, e.g. non-constant conditional covariances that depend on the inputs. For more realistic measures, non-Gaussian distributions should be considered as in \cite{assa2018wasserstein,dimitrakopoulos2020wind}.
Theory can focus on sample complexity analysis of the different metrics and its dependence on the different parameters and embeddings.  


\bibliographystyle{plain}
\bibliography{main}

\onecolumn
\appendix
\subsection{Proof of Lemma 2}

This proof is quite technical and straight forward. We begin with the means:
$\mathbb{E}_x[\|m_{y|x}-m_{\hat y|x}\|^2]$=
\begin{align}
=\mathbb{E}_x\left[{\rm{Tr}}\left(m_{y|x}m_{y|x}^{T}-2m_{y|x}m_{\hat{y}|x}^{T}+m_{\widetilde{y}|x}m_{\widetilde{y}|x}^{T}\right)\right]=
\end{align}
For the term $\mathbb{E}_x\left[{\rm{Tr}}\left(m_{y|x}m_{y|x}^T\right)\right]:$
\begin{align}
&\mathbb{E}_x\left[{\rm{Tr}}\left((m_y+C_{yx}C^{-1}_{xx}(x-m_x))((x-m_x)^T C_{xx}^{-1}C_{xy}+m_y^T)\right)\right]=\nonumber\\
&={\rm{Tr}}\left(0+m_y\cdot m_{\hat y}^T+C_{yx}C^{-1}_{xx}C_{xx}C^{-1}_{xx}C_{xy}+0\right)=\nonumber\\
&={\rm{Tr}}\left(m_y\cdot m_{\hat y}^T+C_{yx}C^{-1}_{xx}C_{xy}\right)
\end{align}
For the term $-2\mathbb{E}_x\left[{\rm{Tr}}\left(m_{y|x}m_{\hat{y}|x}^{t}\right)\right]:$
\begin{align}
&-2\mathbb{E}_x\left[{\rm{Tr}}\left((m_{\hat y}+C_{\hat y x}C^{-1}_{xx}(x-m_x))((x-m_x)^T C_{xx}^{-1}C_{x\hat y}+m_{\hat y}^T)\right)\right]=\nonumber\\
&={\rm{Tr}}\left(-2(0+m_{\hat y}\cdot m_{\hat y}^T+C_{yx}C^{-1}_{xx}C_{xx}C^{-1}_{xx}C_{x\hat{y}}+0)\right)=\nonumber\\
&={\rm{Tr}}\left(-2(m_y\cdot m_{\hat y}^T+C_{yx}C^{-1}_{xx}C_{x\hat y})\right)
\end{align}
For the term $\mathbb{E}_x\left[{\rm{Tr}}\left(m_{\hat y|x}m_{\hat y|x}^T\right)\right]:$
\begin{align}
&\mathbb{E}_x\left[{\rm{Tr}}\left((m_{\hat y}+C_{\hat{y}x}C^{-1}_{xx}(x-m_x))((x-m_x)^T C_{xx}^{-1}C_{x\hat{y}}+m_y^T)\right)\right]=\nonumber\\
&={\rm{Tr}}\left(0+m_{\hat{y}}\cdot m_{\hat y}^T+C_{\hat{y}x}C^{-1}_{xx}C_{xx}C^{-1}_{xx}C_{x{\hat y}}+0\right)=\nonumber\\
&={\rm{Tr}}\left(m_{\hat{y}}m_{\hat y}^T+C_{\hat{y}x}C^{-1}_{xx}C_{x\hat{y}}\right)
\end{align}
Thus 
\begin{align}
&\mathbb{E}_x\left[{\rm{Tr}}\left((m_y+C_{yx}C^{-1}_{xx}(x-m_x))((x-m_x)^T C_{xx}^{-1}C_{xy}+m_y^T)\right)\right.=\nonumber\\
&\mathbb{E}_x\left[{\rm{Tr}}\left(\left(m_{{y}}m_{y}^T-2\cdot m_{{y}}m_{\hat y}^T+m_{\hat{y}}m_{\hat y}^T\right)\right.\right.+\nonumber\\
& +\left.\left.\left(C_{{y}x}C^{-1}_{xx}C_{x\hat{y}}-2\cdot C_{{y}x}C^{-1}_{xx}C_{x\hat{y}}+C_{\hat{y}x}C^{-1}_{xx}C_{x\hat{y}}\right)\right)\right]=\nonumber\\
&\mathbb{E}_x\left[\|m_{{y}}-m_{\hat y}\|^2+\right.\nonumber\\
&+\left.{\rm{Tr}}\left(\left(C_{{y}x}\left(C^{-1}_{xx}\left(C_{x{y}}-C_{x\hat{y}}\right)\right)\right)+\left(\left(\left(C_{{y}x}-C_{\hat{y}x}\right)C^{-1}_{xx}\right)C_{x\hat{y}}\right)\right)\right]=\nonumber\\
&\mathbb{E}_x\left[\|m_{{y}}-m_{\hat y}\|^2+\right.\nonumber\\
&+\left.{\rm{Tr}}\left(\left(\left(C_{{y}x}-C_{\hat{y}x}\right)C^{-1}_{xx}\right)C_{{x}y}-\left(\left(C_{{y}x}-C_{\hat{y}x}\right)C^{-1}_{xx}\right)C_{x\hat{y}}\right)\right]=\nonumber\\
&\|m_{{y}}-m_{\hat y}\|^2+{\rm{Tr}}\left(\left(C_{{y}x}-C_{\hat{y}x}\right)C^{-1}_{xx}\left(C_{{x}y}-C_{x\hat{y}}\right)\right)
\end{align}
Now back to CFID, we conclude that:
\begin{align}
{\rm{CFID}} =&\|m_y-m_{\hat y}\|^2+\\
+&{\rm{Tr}}\left[\nonumber(C_{yx}-C_{\hat y x})C_{xx}^{-1}(C_{xy}-C_{x\hat y})\right]\\
+&{\rm{Tr}}\left[C_{yy|x}+C_{y\hat y|x} -2 \left({C_{yy|x}}^{\frac{1}{2}}C_{\hat y \hat y|x}{C_{yy|x}}^{\frac{1}{2}}\right)^{\frac{1}{2}}\right] \nonumber
\end{align}

\subsection{}{Proof of Lemma 3}

For convenience, we begin by recalling the definitions of the different distances: 
\begin{equation}
    {\rm{CWD}} = \int {\rm{CWD}}(x) {\mathbb{Q}}_xdx
\end{equation}
\begin{equation}
    {\rm{CWD}}(x) = \left\{\begin{array}{ll}
        \min_{\PP_{y,\hat y|x}} & \iint \|y-\hat y\|^2 \PP_{y,\hat y|x} dy d\hat y\\
        {\rm{s.t.}} & \int \PP_{y,\hat y|x}dy = {\mathbb{Q}}_{\hat y|x}\\
       & \int \PP_{y,\hat y|x}d\hat y = {\mathbb{Q}}_{y|x}\\
    \end{array}\right.
\end{equation}
\begin{equation}
    {\rm{RWD3}} = \left\{\begin{array}{ll}
        \min_{\PP_{y,\hat y,x}} & \iiint \|y-\hat y\|^2 \PP_{y,\hat y,x} dy d\hat y dx\\
        {\rm{s.t.}} & \int \PP_{y,\hat y,x}dy = {\mathbb{Q}}_{\hat y,x}\\
       & \int \PP_{y,\hat y,x}d\hat y = {\mathbb{Q}}_{y,x}\\
    \end{array}\right.
\end{equation}
\begin{equation}
    {\rm{RWD}} = \left\{\begin{array}{ll}
        \min_{\PP_{y,\hat y,x,\hat x}} & \iiiint \left\|\begin{array}{c}
             y - \hat y \\
             x - \hat x
        \end{array}\right\|^2\PP_{y,\hat y,x,\hat x}dy d\hat y dx d\hat x \\
        {\rm{s.t.}} & \iint \PP_{y,\hat y,x,\hat x}dydx = {\mathbb{Q}}_{\hat y,x}(\hat y,\hat x)\\
       & \iint \PP_{y,\hat y,x,\hat x}d\hat yd\hat x = {\mathbb{Q}}_{y,x}(y,x)\\
    \end{array}\right.
\end{equation}

\begin{equation}
    {\rm{MWD}} = \left\{\begin{array}{ll}
        \min_{\PP_{y,\hat y}} &\iint \left\|y - \hat y\right\|^2\PP_{y,\hat y}dy d\hat y \\
        {\rm{s.t.}} & \int \PP_{y,\hat y}dy = {\mathbb{Q}}_{\hat y}\\
       &\int  \PP_{y,\hat y}d\hat y = {\mathbb{Q}}_{y}\\
    \end{array}\right.
\end{equation}

Part 1: We begin by proving 
 $${\rm{CWD}}\geq {\rm{RWD3}}$$   
CWD involves an infinite number of optimization problems parameterized by $x$. We denote the optimal solution to CWD$(x)$ by $\PP^{\star}_{y,\hat y|x}$. We then
define
\begin{eqnarray}\label{pstar}
 \PP^{\star}_{x,y,\hat y}:=\ \PP^{\star}_{y,\hat y|x}\cdot \mathbb{Q}_x
\end{eqnarray}
We prove the inequality by showing that (\ref{pstar}) is feasible for RWD3 and yields the same objective.
For this purpose, note that it is a legitimate distribution 
\begin{align}
\int\PP^{\star}_{y,\hat y,x}dyd\hat y dx=\int\left(\int\PP^{\star}_{y, \hat y|x}dyd\hat y \right)\mathbb{Q}_xdx=\int1\cdot\mathbb{Q}_xdx=1 
\end{align}
and feasible since  
\begin{align}
\int\PP^{\star}_{x,y, \hat y}d\hat y=\left(\int\PP^{\star}_{y, \hat y|x}d\hat y\right)\,\mathbb{Q}_x=\PP_{y|x}\cdot\mathbb{Q}_x=\mathbb{Q}_{y,x}
\end{align}
\begin{align}
\int\PP^{\star}_{x,y, \hat y}dy=\left(\int\PP^{\star}_{y, \hat y|x}d y\right)\,\mathbb{Q}_x=\PP_{\hat y|x}\cdot\mathbb{Q}_x=\mathbb{Q}_{\hat y,x}
\end{align}
Finally, it is easy to see that it gives the same objective value as CWD for:
\begin{align}\label{obejectives}
{\rm{CWD}}
=\int\left(\iint \|y-\hat{y}\|^2\PP^{\star}_{y,\hat y|x}(y,\hat y)\mathrm d (y,\hat y)\right)\mathbb{Q}_{x}(x)\mathrm d x\,=
\end{align}
\begin{align}
=\iiint \|y-\hat{y}\|^2\PP^{\star}_{x,y,\hat y}(x,y,\hat y)\mathrm dx,dy,d\hat y\,
\end{align}
Actually, the inequality is tight and ${\rm CWD}={\rm RWD3}$. To prove this, we will show the other direction:
\begin{equation}
    {\rm CWD}\leq{\rm RWD3}
\end{equation}
We denote the optimal solution to $\rm RWD3$ by $\PP^{\star}_{x,y,\hat y}$. We then define
\begin{eqnarray}\label{pstar2}
 \PP^{\star}_{y,\hat y|x} := \frac{\PP^{\star}_{x, y,\hat y}}{\mathbb{Q}_x}
\end{eqnarray}
We prove the inequality by showing that (\ref{pstar2}) is feasible for CWD and yields the same objective. 
For this purpose, note that it is legitimate distribution for all $x$:
\begin{equation}\label{3int}
    \int \PP^{\star}_{y,\hat y|x}dyd\hat y = \int \frac{\PP^{\star}_{x, y,\hat y}}{\mathbb{Q}_x}dyd\hat y 
\end{equation}
Because $\PP^{\star}_{x, y,\hat y}$ is a valid solution for RWD3, thus:
\begin{equation}\label{int}
    \int \PP^{\star}_{x, y,\hat y}dyd\hat y = \int \mathbb{Q}_{\hat y,x}d\hat y = \mathbb{Q}_x
\end{equation}
That is, (\ref{pstar2}) is a legitimate distribution for all $x$. 
Now we show that it also a valid solution:
\begin{equation}
    \int \PP^{\star}_{y,\hat y|x}dy = \int \frac{\PP^{\star}_{x, y,\hat y}}{\mathbb{Q}_x}dy=\frac{\mathbb{Q}_{\hat y,x}}{\mathbb{Q}_x}=\mathbb{Q}_{\hat y|x} 
\end{equation}
\begin{equation}
    \int \PP^{\star}_{y,\hat y|x}d\hat y = \int \frac{\PP^{\star}_{x, y,y}}{\mathbb{Q}_x}= \frac{\mathbb{Q}_{y,x}}{\mathbb{Q}_x}=\mathbb{Q}_{y|x} 
\end{equation}
Finally it yields the same objective value as RWD3 (as shown in (\ref{obejectives})).

\vspace{10mm} 

Part 2: Next, we continue to prove
 $${\rm{RWD3}}\geq {\rm{RWD}}$$
RWD3 and RWD are very similar. The only difference is that 
RWD3 minimizes over $\PP_{y,\hat y, x}$, whereas RWD minimizes over $\PP_{y,\hat y, x, \hat x}$. To prove the inequality, we denote the optimal solution to RWD3 by $\PP^\star_{y,\hat y, x}$ and claim that 
\begin{align}{{\PP^{\star}}}_{y,\hat y, x,\hat x} = \PP^\star_{y,\hat y, x}\delta_{x-\hat x}\end{align}
is feasible for RWD and yields the same objective. Indeed, the delta function ensures that $\int g(\hat x)\delta_{x-\hat x}d\hat x = g(x)$ for any function $g$. Therefore:

\begin{equation}
  \left\|\begin{array}{c}
             y - \hat y \\
             x - \hat x
        \end{array}\right\|^2\cdot\delta_{x-\hat{x}}=\left\{\begin{array}{ll}\|y-\hat y\|^2\ & x=\hat x \\
        0 & x\neq\hat x  \end{array}\right.
\end{equation}

        
        \begin{align}\iint\PP^{\star}_{y,\hat y, x}\delta_{x-\hat x}dydx=\mathbb{Q}_{\hat y,x}\end{align}
        \begin{align}\iint\PP^{\star}_{y,\hat y, x}\delta_{x-\hat x}d\hat yd\hat x=\mathbb{Q}_{ y,x}\end{align}
        as required.
\vspace{10mm}

Part 3: Finally, it remains to show that
   \begin{eqnarray}
    {\rm RWD}\geq {\rm{MWD}}
    \end{eqnarray}
For this purpose, note that
\begin{align}\left\|\begin{array}{c}
             y - \hat y \\
             x - \hat x
        \end{array}\right\|^2\geq\|y-\hat y\|^2\end{align}
Thus, we can omit the squared norm associated with $x-\hat x$. Next, we denote the optimal solution to RWD by $\PP^*_{y,\hat y,x,\hat x}$ and define 
\begin{align}\PP_{y,\hat y} = \iint\PP^*_{y,\hat y,x,\hat x}dxd\hat x\end{align}
This bivariate distribution is feasible for MWD and yields the same objective value as required.

Now we prove the second part of the lemma, that is:
 \begin{align}
    & {\rm{CWD}}_\alpha={\rm{CWD}}\qquad \forall \alpha>0\nonumber\\
    & {\rm{RWD}}_0={\rm{WD}}  \nonumber\\
    & {\rm{RWD}}_\alpha\stackrel{\alpha\rightarrow\infty}{\rightarrow}{\rm{CWD}}.  
\end{align}  
We prove the inequality $\rm RWD\leq RWD3$ is tight if we replace RWD by RWD$_\alpha$ with $\alpha\rightarrow\infty$.
 For this purpose, note that:
    \begin{equation}
    {\rm RWD}_\alpha =
    \left\{
    \begin{array}{ll} \min_{\PP_{x, \hat x, y,\hat y}}  & \mathbb{E}\left[\|y-\hat y\|^2+ \alpha^2\|x-\hat x\|^2\right]   \\
   {\rm{s.t.}} & \PP_{x, \hat x, y,\hat y}\in\Pi({\mathbb{Q}}_{x,y};{\mathbb{Q}}_{\hat x, \hat y})
    \end{array}
    \right.
    \end{equation}
For unbounded $\alpha$, RWD$_\alpha$ is finite only if $x=\hat x$. Thus, the optimal solution satisfies
\begin{equation}
    {\PP^{\star}}_{x,\hat x, y,\hat y} = \PP^\star_{x, y,\hat y}\delta_{x-\hat x}
\end{equation}
where $\delta_{x-\hat x}$ is the delta of Dirac. This is exactly the definition of RWD3 which is identical to CWD.

For $\alpha=0$, the joint distributions $\mathbb{Q}_{\alpha x,y}$ and $\mathbb{Q}_{\alpha\hat x,\hat y}$ are reduced to the marginal distributions $\mathbb{Q}_{y}$ and $\mathbb{Q}_{\hat y}$ and the objective is reduced to $\norm{y - \hat y}^2$. Thus, $\rm RWD_{\alpha=0}=\rm WD$.

Finally, we show that ${\rm CWD}_{\alpha}$ is invariant to $\alpha$.
Note that:
\begin{equation}
    {\rm{CWD}_{\alpha}} = \left\{\begin{array}{ll}
        \min_{\PP_{y,\hat y|\alpha x}} & \int\left(\iint \|y-\hat y\|^2 \PP_{y,\hat y|\alpha x} dy d\hat y\right)\mathbb{Q}_{\alpha x}dx\\
        {\rm{s.t.}} & \int \PP_{y,\hat y|\alpha x}dy = {\mathbb{Q}}_{\hat y|\alpha x}\\
       & \int \PP_{y,\hat y|\alpha x}d\hat y = {\mathbb{Q}}_{y|\alpha x}\\
    \end{array}\right.
\end{equation}
Changing variables in the integral over $x$: $x'=\alpha x$ gives the same value as the original CWD.

\end{document}